\documentclass[10pt,twocolumn,letterpaper]{article}

\usepackage{cvpr}
\usepackage{times}
\usepackage{epsfig}
\usepackage{graphicx}
\usepackage{amsmath}
\usepackage{amsthm}
\usepackage{mathrsfs} 
\usepackage{amssymb}

\theoremstyle{plain}
\newtheorem{thm}{Theorem} % reset theorem numbering for each chapter

\theoremstyle{definition}
\newtheorem{defn}{Definition} % definition numbers are dependent on theorem numbers

\theoremstyle{corollary}
\newtheorem{coro}{Corollary} % definition numbers are dependent on theorem numbers

% Include other packages here, before hyperref.

% If you comment hyperref and then uncomment it, you should delete
% egpaper.aux before re-running latex.  (Or just hit 'q' on the first latex
% run, let it finish, and you should be clear).
\usepackage[pagebackref=true,breaklinks=true,letterpaper=true,colorlinks,bookmarks=false]{hyperref}

\cvprfinalcopy % *** Uncomment this line for the final submission

 % *** Enter the CVPR Paper ID here

% Pages are numbered in submission mode, and unnumbered in camera-ready

\def\V{\mathcal{V}}

\def\Rad{\mathscr{R}}
\def\R{\mathbb{R}}
\def\S{\mathbb{S}}

\ifcvprfinal\pagestyle{empty}\fi
\begin{document}

%%%%%%%%% TITLE
\title{Sliced Wasserstein Kernels for Probability Distributions}

\author{Soheil Kolouri\\
Carnegie Mellon University\\
{\tt\small skolouri@andrew.cmu.edu}
% For a paper whose authors are all at the same institution,
% omit the following lines up until the closing ``}''.
% Additional authors and addresses can be added with ``\and'',
% just like the second author.
% To save space, use either the email address or home page, not both
\and
Yang Zou\\
Carnegie Mellon University\\
{\tt\small yzou2@andrew.cmu.edu}
\and
Gustavo K. Rohde\\
Carnegie Mellon University\\
{\tt\small gustavor@cmu.edu}
}

\maketitle
%\thispagestyle{empty}

%%%%%%%%% ABSTRACT
\begin{abstract}

Optimal transport distances, otherwise known as Wasserstein distances, have recently drawn ample attention in computer vision and machine learning as a powerful discrepancy measure for probability distributions. The recent developments on alternative formulations of the optimal transport have allowed for faster solutions to the problem and has revamped its practical applications in machine learning. In this paper, we exploit the widely used kernel methods and provide a family of provably positive definite kernels based on the Sliced Wasserstein distance and demonstrate the benefits of these kernels in a variety of learning tasks. Our work provides a new perspective on the application of optimal transport flavored distances through kernel methods in machine learning tasks.

\end{abstract}

%%%%%%%%% BODY TEXT
\section{Introduction}
Many computer vision algorithms rely on characterizing images or image features as probability distributions which are often high-dimensional. This is for instance the case for histogram-based methods like the Bag-of-Words (BoW) \cite{jegou2010improving}, feature matching \cite{hauagge2012image}, co-occurence matrices in texture analysis \cite{gomez2012analysis}, action recognition \cite{wang2013dense}, and many more. Having an adequate measure of similarity (or equivalently discrepancy) between distributions becomes crucial in these applications.  The classic distances or divergences for probability densities include Kullback-Leibler divergence,  Kolmogorov distance, Bhattacharyya distance (also known as the Hellinger distance), etc. More recently, however, the optimal transportation framework and the Wasserstein distance \cite{villani2008optimal} also known as the Earth Mover Distance (EMD) \cite{rubner2000earth} have attracted ample interest in the computer vision \cite{solomon2015convolutional}, machine learning \cite{cuturi2013sinkhorn}, and biomedical image analysis \cite{basu2014detecting} communities. The Wasserstein distance computes the optimal warping to map a given input probability measure $\mu$ to a second one $\nu$. The optimality corresponds to a cost function which measures the expected value of the displacement in a  warping. Informally, thinking about $\mu$ and $\nu$ as piles of dirt (or sand), the Wasserstein distance measures a notion of displacement of each sand particle in $\mu$ times its mass to warp $\mu$ into $\nu$. 

The Wasserstein distance has been shown to provide a useful tool to quantify the geometric discrepancy between two distributions. Specifically, they've been used as distances in content-based image image retrieval \cite{rubner2000earth}, modeling and visualization of modes of variation of image intensity values \cite{basu2014detecting,wang2013linear,seguyprincipal,bigot2015geodesic},  estimate the mean of a family of probability measures (i.e. barycenters of distributions) \cite{agueh2011barycenters,rabin2012wasserstein}, modeling and visualization of modes of variation of image intensity values \cite{basu2014detecting,wang2013linear}, cancer detection \cite{ozolek2014accurate,tosun2015detection}, super-resolution \cite{kolouri2015transport}, amongst other applications. Recent advances utilizing variational minimization \cite{haker2004optimal, chartrand2009gradient}, particle approximation \cite{wang2011optimal}, and entropy regularization \cite{cuturi2013sinkhorn,solomon2015convolutional}, have enabled  transport metrics to be efficiently applied to machine learning and computer vision problems. In addition, Wang et al. \cite{wang2013linear} described a method for computing a transport distance (denoted as linear optimal transport) between all image pairs of a dataset of $N$ images that requires only $N$ transport minimization problems. Rabin et al. \cite{rabin2012wasserstein} and  Bonneel et al. \cite{bonneel2015sliced}  proposed to leverage the fact that these problems are easy to solve for one-dimensional distributions, and introduced an alternative distance called the Sliced Wasserstein distance. Finally, recent work  \cite{park2015cumulative,kolouri2015radon} has shown that the transport framework can be used as an invertible signal/image transformation framework that can render signal/image classes more linearly separable, thus facilitating a variety of pattern recognition tasks.

Due to the benefits of using the transport distances outlined above, and given the flexibility and power of kernel-based methods \cite{hofmann2008kernel}, several methods using transport-related distances in constructing kernel matrices have been described with applications in computer vision, and EEG data classification \cite{zhang2007local,daliri2013kernel}. Since positive definite RBF kernels require the metric space induced by the distance to be `flat' (zero curvature) \cite{Feragen_2015_CVPR}, and because the majority of the transport-related distances mentioned above, in particular distances for distributions of dimension higher than one utilizing the $L_2$ cost, do not satisfy this requirement, few options for provably positive definite transport-based kernels have emerged. Cuturi \cite{cuturi2007permanents} for example, suggested utilizing the permanent of the transport polytope, thus guaranteeing the positivity of the derived kernels. More recently, Gardner et al \cite{gardner2015earth} have shown that certain types of earth mover's distances (e.g. the 0-1 distance) can yield kernels which are positive definite.

Here we expand upon these sets of ideas and show that the Sliced Wasserstein distance satisfies the basic requirements for being used as positive definite kernels \cite{hofmann2008kernel} in a variety of regression-based pattern recognition tasks, and have concrete theoretical and practical advantages. Based on recent works on kernel methods \cite{jayasumana2013kernel, jayasumana2015kernel,Feragen_2015_CVPR}, we exploit the connection between the optimal transport framework and the kernel methods and introduce a family of provably positive definite kernels which we denote as Sliced Wasserstein kernels. We derive mathematical results enabling the application of the Sliced Wasserstein metric in the kernel framework and, in contrast to other work, we describe the explicit form for the embedding of the kernel, which is analytically invertible. Finally, we demonstrate experimentally advantages of the Sliced Wasserstein kernels over the commonly used kernels such as the radial basis function (RBF) and the polynomial kernels for a variety of regression.

%We first show that the Gaussian Sliced Wasserstein kernel is indeed a positive definite kernel. Then, we demonstrate that for the Sliced Wasserstein distance the map to the kernel-space (also called the feature-space) is defined explicitly and hence a wide variety of kernels including polynomial kernels can be defined based on this mapping to the feature-space. 

\textbf{Paper organization.} We first review the preliminaries and formally present the $L^p$-Wasserstein distance, the Sliced Wasserstein distance, and review some of the theorems in the literature on positive definiteness of kernels in Section \ref{sec:background}. The main theorems of the paper on the Sliced Wasserstein kernels are stated in Section \ref{sec:swkernels}. In Section \ref{sec:methods} we review some of the kernel-based algorithms including the kernel $k$-means clustering, the kernel PCA, and the kernel SVM.   Section \ref{sec:results} demonstrates the benefits of the Sliced Wasserstein kernel  over the commonly used kernels in a variety of pattern recognition tasks. Finally we conclude our work in Section \ref{sec:conclusion} and lay out future directions for research in the area. 

\section{Background}
\label{sec:background}
\subsection{The $L^p$-Wasserstein distance}
Let $\sigma$ and $\mu$ be two probability measures on measurable spaces $X$ and $Y$ and their corresponding probability density
functions $I_0$ and $I_1$. 

\begin{defn}
 The $L^p$-Wasserstein distance for $p\in [1,\infty)$ is defined as,
\begin{eqnarray}
W_p(\sigma,\mu):=(\operatorname{inf}_{\pi\in\Pi(\sigma,\mu)} \int_{X\times Y} (x-y)^p d\pi(x,y))^{\frac{1}{p}}
\end{eqnarray}
where $\Pi(\sigma,\mu)$ is the set of all transportation plans, $\pi\in \Pi(\sigma,\mu)$, that satisfy the following,
\begin{eqnarray}
\begin{array}{lr}
\pi(A \times Y)= \sigma(A) & \forall A\subseteq X\\
\pi(X \times B)= \mu(B) & \forall B\subseteq Y
\end{array}
\end{eqnarray}
Due to Brenier's theorem \cite{brenier1991polar}, for  absolutely continuous probability measures $\sigma$ and $\mu$   (with respect to Lebesgue measure) the $L^p$-Wasserstein distance can be  equivalently obtained from, 
\begin{eqnarray}
W_p(\sigma,\mu)=(\operatorname{inf}_{f\in MP(\sigma,\mu)} \int_{X} (f(x)-x)^p d\sigma(x))^{\frac{1}{p}}
\end{eqnarray}
where $MP(\sigma,\mu)=\{ f:X\rightarrow Y ~|~ f_{\#}\sigma=\mu\}$. 
\end{defn}
 In the one-dimensional case, the $L^2$-Wasserstein distance has a closed form solution as the MP transport map, $f_{\#}\sigma=\mu$, is unique. We will show this in the following theorem.
\begin{thm}
\label{thm:unique}
Let $\sigma$ and $\mu$ be absolutely continuous probability measures on $\R$ with corresponding positive density functions $I_0$ and $I_1$, and corresponding cumulative distribution functions $CDF_\sigma(x):=\sigma((-\infty,x))$ and $CDF_\mu(x):=\mu((-\infty,x))$.  Then, there only exists one monotonically increasing transport map $f:\R\rightarrow\R$ such that $f_{\#}\sigma=\mu$ and it is defined as, 
\begin{eqnarray}
f(x):= min\{ t\in \R:~ CDF_\mu(t)\geq CDF_\sigma(x)\}
\end{eqnarray}
and the $L^2$-Wasserstein distance is obtained from,  
\begin{eqnarray}
W_2(\sigma,\mu)&=&( \int_{X} (f(x)-x)^2 d\sigma(x))^{\frac{1}{2}}\nonumber\\
&=&( \int_{X} (f(x)-x)^2 I_0(x)dx)^{\frac{1}{2}}.
\end{eqnarray}
Note that throughout the paper we use $W_2(\sigma,\mu)$ and $W_2(I_0,I_1)$ interchangeably. 
\begin{proof}
Assume there exist more than one transport maps, say $f$ and $g$, such that $f_\#\sigma=g_\#\sigma=\mu$, then we can write,
\begin{eqnarray*}
\int_{-\infty}^{f(x)} I_1(\tau)d\tau=\int_{-\infty}^x I_0(\tau)d\tau=\int_{-\infty}^{g(x)} I_1(\tau)d\tau
\end{eqnarray*}
Above is equivalent to $CDF_\mu(f(x))=CDF_\mu(g(x))$, but $I_1$ is positive everywhere, hence the CDF is monotonically increasing, therefore $CDF_\mu(f(x))=CDF_\mu(g(x))$ implies that $f(x)=g(x)$. Therefore the transport map in one dimension is unique.
\end{proof}
\end{thm}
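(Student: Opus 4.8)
The plan is to prove the statement in two stages: first that the displayed map $f$ is well defined, monotone increasing, pushes $\sigma$ onto $\mu$, and is the \emph{only} monotone increasing map with this property; and second that this same $f$ realizes the infimum in the Monge formulation of $W_2$, which gives the closed form.

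\textbf{Stage 1 (existence, the formula, uniqueness).} I would first record what the positivity hypothesis buys: since $I_0$ and $I_1$ are strictly positive, $CDF_\sigma$ and $CDF_\mu$ are continuous and strictly increasing, hence bijections of $\R$ onto $(0,1)$ with continuous inverses. Consequently, for each $x$ the set $\{t\in\R:\ CDF_\mu(t)\ge CDF_\sigma(x)\}$ is a closed half-line whose left endpoint equals $CDF_\mu^{-1}(CDF_\sigma(x))$, so the minimum in the definition of $f$ is attained and $f=CDF_\mu^{-1}\circ CDF_\sigma$; monotone increase of $f$ is then immediate as a composition of increasing maps. To verify $f_\#\sigma=\mu$ I would compare cumulative distribution functions: using that $f$ is increasing, $f^{-1}\big((-\infty,y)\big)=\big(-\infty,\,CDF_\sigma^{-1}(CDF_\mu(y))\big)$, so $(f_\#\sigma)\big((-\infty,y)\big)=CDF_\sigma\big(CDF_\sigma^{-1}(CDF_\mu(y))\big)=CDF_\mu(y)$, which identifies the pushforward with $\mu$. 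For uniqueness, let $g$ be any monotone increasing map with $g_\#\sigma=\mu$. Monotonicity makes each level set $\{x':g(x')=g(x)\}$ an interval, and its $\sigma$-measure must vanish, since otherwise $g_\#\sigma$ would have an atom, contradicting that $\mu$ has a density; combined with continuity of $CDF_\sigma$ this yields $CDF_\mu(g(x))=\sigma(\{x':g(x')<g(x)\})=CDF_\sigma(x)=CDF_\mu(f(x))$, and injectivity of $CDF_\mu$ forces $g=f$.

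\textbf{Stage 2 (the $W_2$ identity).} By the Monge formulation stated above, $W_2(\sigma,\mu)^2=\inf_{g\in MP(\sigma,\mu)}\int_\R (g(x)-x)^2\,d\sigma(x)$, so it remains to show the monotone map $f$ attains this infimum. Expanding the square and using $g_\#\sigma=\mu$, the terms $\int_\R g(x)^2\,d\sigma(x)=\int_\R y^2\,d\mu(y)$ and $\int_\R x^2\,d\sigma(x)$ are identical for every competitor $g\in MP(\sigma,\mu)$, so minimizing the cost is equivalent to maximizing the correlation $\int_\R x\,g(x)\,d\sigma(x)$. By the Hardy--Littlewood rearrangement inequality (equivalently, the fact that the comonotone coupling maximizes $\int xy\,d\pi$ over couplings with the prescribed marginals), this correlation is maximized exactly by the monotone rearrangement, i.e.\ by $g=f$; hence $W_2(\sigma,\mu)^2=\int_\R (f(x)-x)^2\,d\sigma(x)$, and writing $d\sigma(x)=I_0(x)\,dx$ gives the stated formula. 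An alternative to this whole stage is to invoke the classical one-dimensional identity $W_2(\sigma,\mu)^2=\int_0^1 \big(CDF_\mu^{-1}(t)-CDF_\sigma^{-1}(t)\big)^2\,dt$ and substitute $t=CDF_\sigma(x)$.

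The CDF bookkeeping in Stage 1 is routine; the real content — and the step I expect to be the main obstacle for a self-contained argument — is the optimality assertion in Stage 2, namely that the monotone map does not merely transport $\sigma$ to $\mu$ but in fact minimizes the quadratic cost, which rests on the rearrangement/supermodularity inequality rather than on anything special about $f$. A minor technical point is that the positivity hypotheses must be used to exclude plateaus and jumps, so that all the inverse-CDF identities hold as genuine equalities rather than merely almost everywhere.
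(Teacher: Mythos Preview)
Your proposal is correct, and your uniqueness argument is essentially the paper's: from $f_\#\sigma=g_\#\sigma=\mu$ and monotonicity one obtains $CDF_\mu(f(x))=CDF_\sigma(x)=CDF_\mu(g(x))$, and strict monotonicity of $CDF_\mu$ (from $I_1>0$) forces $f=g$. The difference is one of scope rather than method: the paper's proof addresses \emph{only} uniqueness, taking the existence of the monotone transport map and the $W_2$ identity for granted, whereas you also verify that $f=CDF_\mu^{-1}\circ CDF_\sigma$ is well defined and actually pushes $\sigma$ to $\mu$, and you supply the optimality step (via Hardy--Littlewood rearrangement) that justifies the displayed formula for $W_2$. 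What this buys you is a self-contained argument; what the paper's version buys is brevity, at the cost of leaving Stage~2 entirely to the literature. Your remark that positivity is needed to rule out plateaus is apt: the paper's proof uses it in exactly the same place (injectivity of $CDF_\mu$), though it never explicitly checks the pushforward identity $CDF_\mu(f(x))=CDF_\sigma(x)$ that both arguments rely on.
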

The closed-form solution of the Wasserstein distance in one dimension is an attractive property, as it alleviates the need for often computationally intensive optimizations. Recently there has been some work on utilizing this property of the Wasserstein distance to higher dimensional problems \cite{rabin2012wasserstein, bonneel2015sliced, kolouri2015radon} (i.e. images). We review such distances in the following section.

\subsection{The Sliced Wasserstein distance}

 The idea behind the Sliced Wasserstein distance is to first obtain a family of one-dimensional representations for a higher-dimensional probability distribution through projections, and then calculate the distance between two input distributions as a functional on the Wasserstein distance of their one-dimensional representations. In this sense, the distance is obtained by solving several one-dimensional optimal transport problems, which have closed-form solutions. 

\begin{defn}
The $d$-dimensional Radon transform $\Rad$ maps a function  $I\in L^1(\R^d)$ where $L^1(\R^d):=\{ I:\R^d \rightarrow \R | \int_{\R^d} |I(x)|dx \leq \infty\}$ into the set of its integrals over the hyperplanes of $\R^n$ and is defined as, 

\begin{eqnarray}
\Rad I(t,\theta):=\int_{\R^{d-1}} I(t\theta+\gamma\theta^{\perp})d\gamma
\end{eqnarray}
here $\theta^{\perp}$ is the subspace or unit vector orthogonal to $\theta$. Note that $\Rad: L^1(\R^d)\rightarrow L^1(\R\times \S^{d-1})$, where $\S^{d-1}$ is the unit sphere in $\R^{d}$. 
\end{defn}

We note that the Radon transform is an invertible, linear transform and we denote its inverse as $\Rad^{-1}$. For brevity we do not define the inverse Radon transform here, but the details can be found in \cite{natterer1986mathematics}. Next, following  \cite{rabin2012wasserstein, bonneel2015sliced, kolouri2015radon} we define the Sliced Wasserstein distance. 

\begin{defn}
 Let $\mu$ and $\sigma$  be two continuous probability measures on $\R^d$ with corresponding positive probability density functions $I_1$ and $I_0$.  The Sliced Wasserstein distance between $\mu$ and $\sigma$ is defined as, 
 \begin{eqnarray}
 SW(\mu,\nu)&:=&(\int_{\S^{d-1}} W^2_2(\Rad I_1(.,\theta),\Rad I_0(.,\theta)) d\theta)^{\frac{1}{2}} \nonumber\\
 &=& ( \int_{\S^{d-1}} \int_\R (f_\theta(t)-t)^2 \Rad I_0(t,\theta)  dt d\theta)^{\frac{1}{2}}\nonumber\\
 \end{eqnarray}
 where $f_\theta$ is the MP map between $\Rad I_0(.,\theta)$ and $\Rad I_1(.,\theta)$ such that,
 \begin{eqnarray}
\int_{-\infty}^{f_\theta(t)} \Rad I_1(\tau,\theta)  d\tau =\int_{-\infty}^t \Rad I_0(\tau,\theta)  d\tau,~~\forall \theta\in\S^{d-1}
 \end{eqnarray} 
 or equivalently in the differential form, 
\begin{eqnarray}
\frac{ \partial f_\theta(t)}{\partial t} \Rad I_1(f_\theta(t),\theta)=\Rad I_0(t,\theta),~~\forall \theta\in\S^{d-1}.
\end{eqnarray}
\end{defn}
The Sliced Wasserstein distance as defined above is symmetric, and it satisfies subadditivity and coincidence axioms, and hence it is a true metric (See \cite{kolouri2015radon} for proof).
 
\subsection{The Gaussian RBF Kernel on Metric Spaces}
The kernel methods and specifically the Gaussian RBF kernel has shown to be a very powerful tool in a plethora of applications. The Gaussian RBF kernel was initially designed for Euclidean spaces, however, recently there has been several work extending the Gaussian RBF kernel to other metric spaces. Jayasumana et al. \cite{jayasumana2015kernel}, for instance, developed an approach to exploit the Gaussian RBF kernel method on Riemannian manifolds. In another interesting work, Feragen et al. \cite{Feragen_2015_CVPR} considered the Gaussian RBF kernel on general geodesic metric spaces and showed that the geodesic Gaussian kernel is only positive definite when the underlying Riemannian manifold is flat or in other words it is isometric to a Euclidean space.  Here we review some definitions and theorems that will be used in the rest of the paper. 

\begin{defn}
A positive definite (PD) (resp. conditional negative definite) kernel on a set $M$ is a symmetric function $k:M\times M\rightarrow \R$, $k(I_i,I_j)=k(I_j,I_i)$ for all $I_i,I_j\in M$, such that for any $n\in N$, any elements $I_1,...,I_n \in X$, and any number $c_1,...,c_n\in \R$, we have
\begin{eqnarray}
\sum_{i=1}^n\sum_{j=1}^n c_ic_j k(I_i,I_j) \geq 0 ~~~\text{(resp. $\leq 0$)}
\end{eqnarray}
with the additional constraint of  $\sum_{i=1}^n c_i=0$ for the conditionally negative definiteness. 
\end{defn}

Above definition is used in the following important theorem due to Isaac J. Schoenberg \cite{schoenberg1938metric},
\begin{thm}
\label{thm:cnd}
Let $M$ be a nonempty set and $f:(M\times M)\rightarrow \R$ be a function. Then kernel $k(I_i,I_j)=exp(-\gamma f(I_i,I_j))$ for all $I_i,I_j\in M$ is positive definite for all $\gamma>0$ if and only if $f(.,.)$ is conditionally negative definite.
\end{thm}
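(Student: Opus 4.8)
The plan is to prove the two implications of the biconditional separately, reducing conditional negative definiteness to ordinary positive definiteness via a fixed base point, and invoking the Schur product theorem together with the elementary closure properties of positive definite (PD) kernels (closure under positive scaling, under rank-one conjugation $K \mapsto h(I_i)h(I_j)K(I_i,I_j)$, under sums, and under pointwise limits).

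\textbf{The easy direction (PD for all $\gamma>0$ $\Rightarrow$ $f$ conditionally negative definite).} Fix elements $I_1,\dots,I_n \in M$ and reals $c_1,\dots,c_n$ with $\sum_i c_i = 0$. For small $\gamma>0$ I would expand $\exp(-\gamma f(I_i,I_j)) = 1 - \gamma f(I_i,I_j) + o(\gamma)$, which is legitimate since there are only finitely many pairs. Because $\sum_{i,j} c_i c_j \cdot 1 = (\sum_i c_i)^2 = 0$ by the constraint, the hypothesis $\sum_{i,j} c_i c_j \exp(-\gamma f(I_i,I_j)) \ge 0$ becomes $-\gamma \sum_{i,j} c_i c_j f(I_i,I_j) + o(\gamma) \ge 0$; dividing by $\gamma$ and letting $\gamma \downarrow 0$ yields $\sum_{i,j} c_i c_j f(I_i,I_j) \le 0$, i.e. $f$ is conditionally negative definite.

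\textbf{The main direction ($f$ conditionally negative definite $\Rightarrow$ $\exp(-\gamma f)$ PD for all $\gamma>0$).} Fix an arbitrary base point $I_0 \in M$ and define the auxiliary kernel $g(I_i,I_j) := f(I_i,I_0) + f(I_0,I_j) - f(I_i,I_j) - f(I_0,I_0)$. The crux is the lemma that $g$ is positive definite: given $c_1,\dots,c_n$, adjoin the point $I_0$ with compensating weight $c_0 := -\sum_{i=1}^n c_i$, so that $\sum_{i=0}^n c_i = 0$ and conditional negative definiteness of $f$ gives $\sum_{i,j=0}^n c_i c_j f(I_i,I_j) \le 0$; expanding this sum and substituting $c_0 = -\sum_{i\ge 1} c_i$ shows it equals $-\sum_{i,j=1}^n c_i c_j g(I_i,I_j)$, so $\sum_{i,j=1}^n c_i c_j g(I_i,I_j) \ge 0$. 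Next, $\exp(\gamma g)$ is PD for every $\gamma>0$: writing $\exp(\gamma g) = \sum_{m\ge 0} \frac{\gamma^m}{m!}\, g^{\odot m}$, each entrywise power $g^{\odot m}$ is PD by the Schur product theorem, and a convergent sum of nonnegative multiples of PD kernels is PD. Finally, from $f(I_i,I_j) = f(I_i,I_0) + f(I_0,I_j) - f(I_0,I_0) - g(I_i,I_j)$ I would write $\exp(-\gamma f(I_i,I_j)) = e^{\gamma f(I_0,I_0)}\, h(I_i)\, h(I_j)\, \exp(\gamma g(I_i,I_j))$ with $h(I) := \exp(-\gamma f(I,I_0))$; since the positive constant $e^{\gamma f(I_0,I_0)}$ and the rank-one conjugation by $h$ both preserve positive definiteness, $k(I_i,I_j) = \exp(-\gamma f(I_i,I_j))$ is PD.

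I expect the positive-definiteness lemma for $g$ to be the only real obstacle; its proof hinges on the ``centering'' trick of adjoining the base point $I_0$ with weight $-\sum_i c_i$, which converts the zero-sum hypothesis in the definition of conditional negative definiteness into an unconstrained statement about $g$. The remaining ingredients — the first-order expansion in $\gamma$, the Schur product theorem, and the closure of PD kernels under positive scaling, sums, pointwise limits, and rank-one conjugation — are standard and require only bookkeeping.
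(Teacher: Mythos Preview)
Your argument is correct and is in fact the classical proof of Schoenberg's theorem. The paper does not supply its own proof of this statement at all: it simply cites Chapter~3, Theorem~2.2 of Berg--Christensen--Ressel, \emph{Harmonic Analysis on Semigroups}. Your write-up reproduces precisely the approach in that reference---the base-point centering $g(I_i,I_j)=f(I_i,I_0)+f(I_0,I_j)-f(I_i,I_j)-f(I_0,I_0)$ to pass from conditionally negative definite to positive definite, the Schur product/series expansion for $\exp(\gamma g)$, and the rank-one conjugation to recover $\exp(-\gamma f)$---so there is no methodological difference to discuss. One cosmetic point: since the paper's Definition of a conditionally negative definite kernel includes symmetry, you might add a line noting that symmetry of $f$ follows from symmetry of $\exp(-\gamma f)$ (required by positive definiteness) for all $\gamma>0$.
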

The detailed proof of above theorem can be found in Chapter 3, Theorem 2.2 of \cite{berg1984harmonic}.

Following the work by Jayasumana et al. \cite{jayasumana2015kernel}, here we state the theorem (Theorem 6.1 in \cite{jayasumana2015kernel}) which provides the necessary and sufficient conditions for obtaining a positive definite Gaussian kernel from a given distance function defined on a generic metric space.
\begin{thm}
\label{thm:gausskernel}
Let $(M,d)$ be a metric space and define $k:M\times M\rightarrow \R$  by $k(I_i,I_j):=exp(-\gamma d^2(I_i,I_j))$ for all $I_i,I_j\in M$. Then $k(.,.)$ is a positive definite kernel for all $\gamma>0$  if and only if there exists an inner product space $\V$ and a function $\psi:M\rightarrow \V$ such that $d(I_i,I_j)=\|\psi(I_i)-\psi(I_j)\|_\V$.
\end{thm}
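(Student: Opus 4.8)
The plan is to reduce the statement to Schoenberg's theorem (Theorem~\ref{thm:cnd}) applied with $f=d^2$: the kernel $k(I_i,I_j)=\exp(-\gamma d^2(I_i,I_j))$ is positive definite for every $\gamma>0$ if and only if $d^2$ is conditionally negative definite. Hence it suffices to show that $d^2$ is conditionally negative definite if and only if $d$ comes from an embedding into an inner product space, i.e. there is a function $\psi:M\to\V$ with $d(I_i,I_j)=\|\psi(I_i)-\psi(I_j)\|_\V$.

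The $(\Leftarrow)$ direction is a direct expansion. First I would write $\|\psi(I_i)-\psi(I_j)\|_\V^2=\langle\psi(I_i),\psi(I_i)\rangle-2\langle\psi(I_i),\psi(I_j)\rangle+\langle\psi(I_j),\psi(I_j)\rangle$. Then, for any $I_1,\dots,I_n\in M$ and any $c_1,\dots,c_n\in\R$ with $\sum_i c_i=0$, the two ``diagonal'' contributions $\sum_{i,j}c_ic_j\langle\psi(I_i),\psi(I_i)\rangle$ and $\sum_{i,j}c_ic_j\langle\psi(I_j),\psi(I_j)\rangle$ vanish, since each carries a factor $\sum_i c_i=0$. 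What remains is $\sum_{i,j}c_ic_j\,d^2(I_i,I_j)=-2\,\|\sum_i c_i\psi(I_i)\|_\V^2\le 0$, which is exactly conditional negative definiteness of $d^2$.

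For the $(\Rightarrow)$ direction, which I expect to be the crux, I would reconstruct the embedding by the classical centering trick. Fix a base point $I_0\in M$ and define the symmetric kernel $\phi(I_i,I_j):=\tfrac12\big(d^2(I_i,I_0)+d^2(I_j,I_0)-d^2(I_i,I_j)\big)$. The key lemma is that $\phi$ is positive definite: given arbitrary $c_1,\dots,c_n\in\R$, set $c_0:=-\sum_{i=1}^n c_i$ so that $c_0,c_1,\dots,c_n$ sum to zero, apply conditional negative definiteness of $d^2$ to the points $I_0,I_1,\dots,I_n$ with these coefficients, use $d(I_0,I_0)=0$, and regroup the resulting double sum; this produces precisely $\sum_{i,j}c_ic_j\,\phi(I_i,I_j)\ge 0$. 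Once $\phi$ is known to be positive definite, I would invoke the standard feature-map construction: take $\V$ to be the linear span of $\{\phi(\cdot,I):I\in M\}$ endowed with the bilinear form determined by $\langle\phi(\cdot,I_i),\phi(\cdot,I_j)\rangle_\V:=\phi(I_i,I_j)$ (passing to the quotient by the null space of this form, if needed, to obtain a genuine inner product), and set $\psi(I):=\phi(\cdot,I)$, so that $\langle\psi(I_i),\psi(I_j)\rangle_\V=\phi(I_i,I_j)$.

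Finally, I would verify that this $\psi$ does the job: using $\phi(I_i,I_i)=d^2(I_i,I_0)$, one computes $\|\psi(I_i)-\psi(I_j)\|_\V^2=\phi(I_i,I_i)-2\phi(I_i,I_j)+\phi(I_j,I_j)=d^2(I_i,I_j)$, so $d(I_i,I_j)=\|\psi(I_i)-\psi(I_j)\|_\V$, closing the argument. The only subtle points are the choice of the anchored kernel $\phi$ and the augmented coefficient $c_0=-\sum_i c_i$ in establishing positive definiteness of $\phi$; everything else is bookkeeping, and the reduction through Theorem~\ref{thm:cnd} is immediate.
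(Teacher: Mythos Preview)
Your proposal is correct and follows essentially the same approach as the paper: both directions are reduced, via Theorem~\ref{thm:cnd}, to the equivalence between conditional negative definiteness of $d^2$ and the existence of an isometric embedding into an inner product space. The paper merely asserts the $(\Leftarrow)$ direction as ``straightforward'' and defers the $(\Rightarrow)$ direction to \cite{jayasumana2015kernel,berg1984harmonic}, whereas you spell out both, including the classical anchored-kernel construction $\phi(I_i,I_j)=\tfrac12(d^2(I_i,I_0)+d^2(I_j,I_0)-d^2(I_i,I_j))$ that underlies those references.
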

\begin{proof}
The detailed proof is presented in \cite{jayasumana2015kernel}. The gist of the proof, however, follows from Theorem \ref{thm:cnd} which states that positive definiteness of $k(.,.)$ for all $\gamma>0$ and conditionally negative definiteness of $d^2(.,.)$ are equivalent conditions. Hence, for $d(I_i,I_j)=\|\psi(I_i)-\psi(I_j)\|_\V$ it is straightforward to show that $d^2(.,.)$ is conditionally negative definite and therefore $k(.,.)$ is positive definite. On the other hand, if $k(.,.)$ is positive definite then $d^2(.,.)$ is conditionally negative definite, and since $d(I_i,I_i)=0$ for all $I_i\in M$ a vector space $\V$ exists for which $d(I_i,I_j)=\|\psi(I_i)-\psi(I_j)\|_\V$ for some $\psi:M\rightarrow \V$ \cite{jayasumana2015kernel,berg1984harmonic}.
\end{proof}

\section{Sliced Wasserstein Kernels}
\label{sec:swkernels}

In this section we present our main theorems. We first demonstrate that the Sliced Wasserstein Gaussian kernel of probability measures is a positive definite kernel. We proceed our argument by showing that there is an explicit  formulation for the nonlinear mapping to the kernel space (aka feature space) and define a family of kernels based on this mapping. 

We start by proving that for one-dimensional probability density functions the $L^2$-Wasserstein Gaussian kernel is a positive definite kernel. 

\begin{thm}
Let $M$ be the set of absolutely continuous one-dimensional positive probability density functions and define $k: M\times M\rightarrow \R$  to be $k(I_i,I_j):=exp(-\gamma W_2^2(I_i,I_j))$ then $k(.,.)$ is a positive definite kernel for all $\gamma>0$. 
\label{thm:wassersteinkernel}
\end{thm}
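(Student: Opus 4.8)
The plan is to apply Theorem~\ref{thm:gausskernel}: it suffices to exhibit an inner product space $\V$ and a map $\psi: M \to \V$ such that $W_2(I_i, I_j) = \|\psi(I_i) - \psi(I_j)\|_\V$. The natural candidate for $\psi$ is the classical ``cumulative distribution transform'' (or quantile-function) embedding. Fix once and for all a reference density, say the uniform density $I_r$ on $[0,1]$ (or more generally any fixed absolutely continuous positive density on $\R$), with CDF $R(x)$. For each $I \in M$ with CDF $CDF_I$, Theorem~\ref{thm:unique} gives a unique monotone transport map $f_I$ with $(f_I)_\#\, I_r = I$; concretely $f_I = CDF_I^{-1} \circ R$, i.e.\ $f_I$ is the quantile function of $I$ reparametrized by $R$. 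Define $\psi(I) := f_I - \mathrm{id}$, viewed as an element of $\V := L^2([0,1], I_r)$ (equivalently $L^2$ with respect to the reference measure). Since $f_I$ is a monotone function between probability densities, $\psi(I)$ indeed lies in $L^2(I_r)$, so the embedding is well-defined.

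The key step is to verify the isometry identity $W_2(I_i, I_j) = \|\psi(I_i) - \psi(I_j)\|_{L^2(I_r)}$. Here I would invoke the one-dimensional change-of-variables / composition property of monotone transport maps: if $f_i$ pushes $I_r$ to $I_i$ and $f_j$ pushes $I_r$ to $I_j$, then $f_j \circ f_i^{-1}$ is the (unique, monotone) transport map pushing $I_i$ to $I_j$, and it is optimal because in one dimension the monotone map is the $W_2$-optimal map (Theorem~\ref{thm:unique}). Therefore
\begin{eqnarray*}
W_2^2(I_i, I_j) &=& \int_\R \big( (f_j \circ f_i^{-1})(y) - y \big)^2 \, I_i(y)\, dy \\
&=& \int_0^1 \big( f_j(x) - f_i(x) \big)^2 \, I_r(x)\, dx,
\end{eqnarray*}
where the second equality is the substitution $y = f_i(x)$, using $(f_i)_\# I_r = I_i$. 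The right-hand side is exactly $\|(f_j - \mathrm{id}) - (f_i - \mathrm{id})\|_{L^2(I_r)}^2 = \|\psi(I_i) - \psi(I_j)\|_\V^2$, which is the desired identity. With the isometry in hand, Theorem~\ref{thm:gausskernel} immediately yields that $k(I_i,I_j) = \exp(-\gamma W_2^2(I_i,I_j))$ is positive definite for all $\gamma > 0$.

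The main obstacle is making the change-of-variables step fully rigorous: one must check that $f_i$ is absolutely continuous with a.e.\ positive derivative (so that the substitution $y = f_i(x)$ is legitimate and $f_i^{-1}$ behaves well), which follows from positivity and absolute continuity of the densities together with the explicit formula $\frac{\partial f_i}{\partial x}\, I_i(f_i(x)) = I_r(x)$ from the differential form of the transport constraint, and that $f_j \circ f_i^{-1}$ is genuinely the optimal map for $W_2(I_i,I_j)$ rather than merely \emph{a} transport map — but this is precisely the uniqueness/monotonicity content of Theorem~\ref{thm:unique}. A minor technical point is the choice of reference measure: the argument works for any fixed absolutely continuous positive reference density, and the resulting $\psi$ depends on that choice, but the isometry — and hence positive definiteness — does not. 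Everything else is a routine verification that $\psi(I) \in L^2(I_r)$ and that $\psi$ is well-defined.
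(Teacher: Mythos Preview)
Your proposal is correct and follows essentially the same route as the paper: fix a reference density, embed each $I$ via its optimal transport map to the reference, verify the isometry $W_2(I_i,I_j)=\|\psi(I_i)-\psi(I_j)\|$ by the change of variables $y=f_i(x)$ together with the composition/uniqueness of monotone transport maps from Theorem~\ref{thm:unique}, and then invoke Theorem~\ref{thm:gausskernel}. The only cosmetic difference is that the paper writes the embedding as $(f-\mathrm{id})\sqrt{I_0}$ in plain $L^2$ whereas you write it as $f_I-\mathrm{id}$ in the weighted space $L^2(I_r)$, which are manifestly the same thing.
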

\begin{proof}
In order to be able to show this, we first show that for absolutely continuous one-dimensional positive probability density functions there exists an inner product space $\V$ and a function $\psi:M\rightarrow\V$ such that $W_2(I_i,I_j)=\|\psi(I_i)-\psi(I_j)\|_\V$.  

Let $\mu$, $\nu$, and $\sigma$ be probability measures on $\R$ with corresponding  absolutely continuous positive density functions $I_0$, $I_1$, and $I_2$.  Let $f,g,h:\R\rightarrow \R$ be transport maps such that $f_{\#}\sigma=\mu$, $g_{\#}\sigma=\nu$, and $h_{\#}\mu=\nu$. In the differential form this is equivalent to $f'I_1(f)=g'I_2(g)=I_0$ and $h'I_2(h)=I_1$ where $I_1(f)$ represents $I_1\circ f$. Then we have, 
\begin{eqnarray*}
W^2_2(I_1,I_0)&=&\int_\R (f(x)-x)^2I_0(x)dx,\\
W^2_2(I_2,I_0)&=&\int_\R (g(x)-x)^2I_0(x)dx,\\
W^2_2(I_2,I_1)&=&\int_\R (h(x)-x)^2I_1(x)dx.
\end{eqnarray*}
We follow the work of Wang et al \cite{wang2013linear} and Park et al. \cite{park2015cumulative} and define a nonlinear map with respect to a fixed probability measure, $\sigma$ with corresponding density $I_0$, that maps an input probability density to a linear functional on the corresponding transport map. More precisely, $\psi_{\sigma}(I_1(.)) := (f(.)-id(.))\sqrt{I_0(.)}$ where $id(.)$ is the identity map and $f'I_1(f)=I_0$. Notice that such $\psi_\sigma$ maps the fixed probability density $I_0$ to zero, $\psi_\sigma(I_0(.))= (id(.)-id(.))\sqrt{I_0(.)}=0$ and it satisfies,
\begin{eqnarray*}
W_2(I_1,I_0)&=&\|\psi_\sigma(I_1)\|_2\\
W_2(I_2,I_0)&=&\|\psi_\sigma(I_2)\|_2.
\end{eqnarray*}
More importantly, we demonstrate that $W_2(I_2,I_1)=\|\psi_\sigma(I_1)-\psi_\sigma(I_2)\|_2$. To show this, we can write,
 \begin{eqnarray*}
W^2_2(I_2,I_1)&=&\int_\R (h(x)-x)^2I_1(x)dx\\
&=& \int_\R (h(f(\tau))-f(\tau))^2f'(\tau)I_1(f(\tau))d\tau \\
&=& \int_\R (g(\tau)-f(\tau))^2I_0(\tau)d\tau \\
&=& \int_\R ((g(\tau)-\tau)-(f(\tau)-\tau))^2I_0(\tau)d\tau \\
&=& \|\psi_\sigma(I_1)-\psi_\sigma(I_2)\|^2_2
\end{eqnarray*}
where in the second line we used the change of variable $f(\tau)=x$. In the third line, we used the fact that composition of transport maps is also a transport map, in other words, since $f_{\#}\sigma=\mu$ and $h_{\#}\mu=\nu$ then $(h\circ f)_{\#}\sigma=\nu$. Finally, from Theorem \ref{thm:unique} we have that the one-dimensional transport maps  are unique, therefore if $(h\circ f)_{\#}\sigma=\nu$ and $g_{\#}\sigma=\nu$ then $h\circ f=g$.

We  showed that there exists a nonlinear map $\psi_\sigma:M\rightarrow \V$ for which $W_2(I_i,I_j)=\|\psi_\sigma(I_i)-\psi_\sigma(I_j)\|_2$ and therefore according to Theorem \ref{thm:gausskernel}, $k(I_i,I_j):= exp(-\gamma W^2_2(I_i,I_j))$ is a positive definite kernel. 
\end{proof}
Combining the results in Theorems \ref{thm:wassersteinkernel} and \ref{thm:cnd} will lead to the following corollary.
\begin{coro}
The  squared L$^2$-Wasserstein distance for  continuous one-dimensional positive probability density functions, $W^2_2(.,.)$, is a conditionally negative definite function. 
\label{coro:cnd}
\end{coro}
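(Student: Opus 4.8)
The plan is to obtain this as an immediate consequence of the results just established, since the substantive work was already done in the proof of Theorem~\ref{thm:wassersteinkernel}. The cleanest route I would take is to invoke Schoenberg's Theorem~\ref{thm:cnd} in its converse direction: Theorem~\ref{thm:wassersteinkernel} asserts that $k(I_i,I_j)=exp(-\gamma W_2^2(I_i,I_j))$ is positive definite for \emph{every} $\gamma>0$, and Theorem~\ref{thm:cnd} states that this is equivalent to $W_2^2(\cdot,\cdot)$ being conditionally negative definite. Reading off the ``only if'' half of that equivalence gives the claim.

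I would also record the alternative, self-contained argument that bypasses Schoenberg, since it makes the corollary transparent. From the proof of Theorem~\ref{thm:wassersteinkernel} there is an inner product space $\V$ (namely $L^2$ weighted by the fixed reference density $I_0$) and a map $\psi_\sigma:M\to\V$ with $W_2(I_i,I_j)=\|\psi_\sigma(I_i)-\psi_\sigma(I_j)\|_\V$, hence $W_2^2(I_i,I_j)=\|\psi_\sigma(I_i)-\psi_\sigma(I_j)\|_\V^2$. Then for any $I_1,\dots,I_n\in M$ and any $c_1,\dots,c_n\in\R$ with $\sum_i c_i=0$, expanding the square and using bilinearity of $\langle\cdot,\cdot\rangle_\V$ yields
\begin{eqnarray*}
\sum_{i,j} c_ic_j\,\|\psi_\sigma(I_i)-\psi_\sigma(I_j)\|_\V^2 &=& -2\Big\|\sum_i c_i\,\psi_\sigma(I_i)\Big\|_\V^2 \;\le\; 0,
\end{eqnarray*}
because the $\|\psi_\sigma(I_i)\|_\V^2$ and $\|\psi_\sigma(I_j)\|_\V^2$ terms carry the factor $\sum_j c_j=0$ and $\sum_i c_i=0$ respectively. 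Symmetry of $W_2^2$ and $W_2^2(I_i,I_i)=0$ follow from the same formula, so the definition of conditional negative definiteness applies verbatim.

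I do not expect a serious obstacle: the entire content lives in Theorem~\ref{thm:wassersteinkernel}. The only points demanding a moment's care are bookkeeping — making sure the direction of Schoenberg's equivalence being used is ``positive definite for all $\gamma>0\ \Rightarrow$ conditionally negative definite'' rather than the forward implication, and, in the self-contained version, noting that although $\psi_\sigma$ is built relative to an arbitrary but fixed reference density $I_0$, the conclusion is independent of that choice since the left-hand side above does not mention $\sigma$.
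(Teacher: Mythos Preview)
Your primary route is exactly the paper's: the corollary is stated as an immediate consequence of combining Theorem~\ref{thm:wassersteinkernel} with the ``only if'' direction of Schoenberg's Theorem~\ref{thm:cnd}. Your additional self-contained expansion via the embedding $\psi_\sigma$ is correct and a welcome explicit check, but the paper does not spell it out.
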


Moreover, following the work of Feragen et al \cite{Feragen_2015_CVPR}, Theorem \ref{thm:wassersteinkernel} also states that the Wasserstein space in one dimension (the space of one dimensional absolutely continuous positive probability densities endowed with the $L^2$-Wasserstein metric) is a flat space, in the sense that it is isometric to the Euclidean space. 

\subsection{The Sliced Wasserstein Gaussian kernel}

Now we are ready to show that the Sliced Wasserstein Gaussian kernel is a positive definite kernel. 
\begin{thm}
Let $M$ be the set of absolutely continuous positive probability density functions and define $k: M\times M\rightarrow \R$  to be $k(I_i,I_j):=exp(-\gamma SW^2(I_i,I_j))$ then $k(.,.)$ is a positive definite kernel for all $\gamma>0$. 
\label{thm:main}
\end{thm}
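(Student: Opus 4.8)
The plan is to reduce the statement to the one-dimensional result already in hand, by exploiting the fact that, by definition, $SW^2$ is an average over the sphere $\S^{d-1}$ of one-dimensional squared Wasserstein distances between Radon projections, and that conditional negative definiteness is preserved under such averaging. Concretely, I would proceed via Schoenberg's theorem: show $SW^2(\cdot,\cdot)$ is conditionally negative definite on $M$, and then invoke Theorem \ref{thm:cnd} to conclude that $\exp(-\gamma SW^2(\cdot,\cdot))$ is positive definite for all $\gamma>0$.

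First I would verify that for each fixed direction $\theta\in\S^{d-1}$, the slice $\Rad I(\cdot,\theta)$ is an admissible one-dimensional density: nonnegativity and positivity are inherited from $I$ since $\Rad I(t,\theta)$ is an integral of $I$ over a hyperplane, and Fubini gives $\int_\R \Rad I(t,\theta)\,dt=\int_{\R^d} I(x)\,dx=1$. Hence, for each fixed $\theta$, the map $(I_i,I_j)\mapsto W_2^2(\Rad I_i(\cdot,\theta),\Rad I_j(\cdot,\theta))$ is literally a one-dimensional squared Wasserstein distance evaluated on positive absolutely continuous densities, which by Corollary \ref{coro:cnd} is conditionally negative definite on $M$.

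Second, I would use stability of conditional negative definiteness under integration against a nonnegative measure: for any $I_1,\dots,I_n\in M$ and $c_1,\dots,c_n\in\R$ with $\sum_i c_i=0$,
\begin{eqnarray*}
\sum_{i=1}^n\sum_{j=1}^n c_ic_j\, SW^2(I_i,I_j) = \int_{\S^{d-1}}\Big(\sum_{i=1}^n\sum_{j=1}^n c_ic_j\, W_2^2(\Rad I_i(\cdot,\theta),\Rad I_j(\cdot,\theta))\Big)d\theta \leq 0,
\end{eqnarray*}
since the integrand is $\leq 0$ pointwise in $\theta$ by the previous step and $\S^{d-1}$ has finite measure. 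This establishes that $SW^2$ is conditionally negative definite on $M$, and Theorem \ref{thm:cnd} then finishes the argument. Alternatively — and this is the route that also yields the explicit feature map advertised in the introduction — one can glue the per-slice one-dimensional embeddings $\psi_{\sigma,\theta}$ from the proof of Theorem \ref{thm:wassersteinkernel} into a single map $\Psi_\sigma:M\rightarrow L^2(\R\times\S^{d-1})$ defined by $\Psi_\sigma(I)(t,\theta):=(f_\theta(t)-t)\sqrt{\Rad I_0(t,\theta)}$, and verify $SW(I_i,I_j)=\|\Psi_\sigma(I_i)-\Psi_\sigma(I_j)\|_{L^2(\R\times\S^{d-1})}$ simply by integrating the one-dimensional identity $W_2(\Rad I_i(\cdot,\theta),\Rad I_j(\cdot,\theta))=\|\psi_{\sigma,\theta}(\Rad I_i(\cdot,\theta))-\psi_{\sigma,\theta}(\Rad I_j(\cdot,\theta))\|_2$ over $\theta$; Theorem \ref{thm:gausskernel} then gives the same conclusion.

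I do not expect a serious conceptual obstacle here; the difficulty is bookkeeping. The main point to handle carefully is confirming that each Radon slice $\Rad I(\cdot,\theta)$ is a genuine positive absolutely continuous one-dimensional probability density, so that Corollary \ref{coro:cnd} (equivalently Theorem \ref{thm:wassersteinkernel}) applies verbatim to every slice, and then justifying the joint measurability and integrability in $\theta$ of $\theta\mapsto W_2^2(\Rad I_i(\cdot,\theta),\Rad I_j(\cdot,\theta))$ so that the interchange of sum and integral above is legitimate; this follows from continuity of $\theta\mapsto\Rad I(\cdot,\theta)$ together with the boundedness of $SW^2$ on the relevant family and the finiteness of the uniform measure on $\S^{d-1}$.
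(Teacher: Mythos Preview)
Your proposal is correct and follows essentially the same route as the paper: verify that each Radon slice is a one-dimensional positive absolutely continuous density, apply Corollary~\ref{coro:cnd} pointwise in $\theta$ to get conditional negative definiteness of the integrand, integrate over $\S^{d-1}$ to conclude $SW^2$ is conditionally negative definite, and finish with Schoenberg's Theorem~\ref{thm:cnd}. The alternative embedding route you sketch is exactly the explicit $\phi_\sigma$ the paper records immediately after the theorem (Equation~\eqref{eq:phi}), so both arguments appear in the paper in the same order you propose.
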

\begin{proof}
First note that for an absolutely continuous positive probability density function, $I\in M$, each hyperplane integral, $\Rad I(.,\theta)$, $\forall \theta\in\S^{d-1}$ is a one dimensional absolutely continuous positive probability density function. Therefore, following Corollary \ref{coro:cnd} for $I_1,...,I_N\in M$ we have, 
\begin{eqnarray}
\sum_{i=1}^N\sum_{j=1}^N c_ic_j W^2_2(\Rad I_i(.,\theta),\Rad I_j(.,\theta) )\leq 0, ~\forall\theta\in \S^{d-1}
\end{eqnarray}
where $\sum_{i=1}^N c_i=0$. Integrating the left hand side of above inequality over $\theta$ leads to, 
\begin{eqnarray}
\int_{\S^{d-1}}(\sum_{i=1}^N\sum_{j=1}^N c_ic_j W^2_2(\Rad I_i(.,\theta),\Rad I_j(.,\theta) )d\theta)\leq 0 &\Rightarrow& \nonumber\\
\sum_{i=1}^N\sum_{j=1}^N c_ic_j (\int_{\S^{d-1}} W^2_2(\Rad I_i(.,\theta),\Rad I_j(.,\theta) )d\theta)\leq 0 &\Rightarrow& \nonumber\\
\sum_{i=1}^N\sum_{j=1}^N c_ic_j SW^2(I_i,I_j)\leq 0
\end{eqnarray}
Therefore $SW^2(.,.)$ is conditionally negative definite, and hence from Theorem  \ref{thm:cnd} we have that $k(I_i,I_j):=exp(-\gamma SW^2(I_i,I_j))$ is a positive definite kernel for $\gamma>0$.
\end{proof}

The following corollary follows from Theorems \ref{thm:gausskernel} and \ref{thm:main}. 
\begin{coro}
Let $M$ be the set of absolutely continuous positive probability density functions and let $SW(.,.)$ be the sliced Wasserstein distance, then there exists an inner product space $\V$ and a function $\phi:M\rightarrow\V$ such that $SW(I_i,I_j)=\|\phi(I_i)-\phi(I_j)\|_\V$, for $\forall I_i, I_j \in M$.  
\label{coro:SW}
\end{coro}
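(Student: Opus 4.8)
The plan is to present two arguments: a short existence proof that is an immediate consequence of the two cited theorems, followed by an explicit construction of the embedding (the one alluded to in the introduction, which is analytically invertible). For the existence proof, recall that $SW(\cdot,\cdot)$ has already been shown to be a genuine metric on $M$, so $(M,SW)$ is a metric space. Theorem \ref{thm:main} says $k(I_i,I_j)=\exp(-\gamma SW^2(I_i,I_j))$ is positive definite for every $\gamma>0$; applying Theorem \ref{thm:gausskernel} with $d=SW$ then yields an inner product space $\V$ and a map $\phi:M\to\V$ with $SW(I_i,I_j)=\|\phi(I_i)-\phi(I_j)\|_\V$. That is the whole argument in the abstract.

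I would then make the embedding concrete by gluing together the one-dimensional maps from the proof of Theorem \ref{thm:wassersteinkernel} over all projection directions. Fix a reference density $I_0\in M$. As noted in the proof of Theorem \ref{thm:main}, for each $\theta\in\S^{d-1}$ the slice $\Rad I_0(\cdot,\theta)$ is a one-dimensional absolutely continuous positive density, and so is $\Rad I(\cdot,\theta)$ for any $I\in M$; let $f_\theta$ be the unique monotone transport map pushing $\Rad I_0(\cdot,\theta)$ onto $\Rad I(\cdot,\theta)$ (Theorem \ref{thm:unique}). Define
\[
\phi(I)(t,\theta):=\bigl(f_\theta(t)-t\bigr)\sqrt{\Rad I_0(t,\theta)},\qquad \V:=L^2(\R\times\S^{d-1}).
\]
By the identity established in the proof of Theorem \ref{thm:wassersteinkernel}, applied slice by slice, $\|\phi(I_i)(\cdot,\theta)-\phi(I_j)(\cdot,\theta)\|_{L^2(\R)}^2=W_2^2(\Rad I_i(\cdot,\theta),\Rad I_j(\cdot,\theta))$ for every fixed $\theta$; integrating over $\S^{d-1}$ and using Fubini gives
\[
\|\phi(I_i)-\phi(I_j)\|_\V^2=\int_{\S^{d-1}}W_2^2\bigl(\Rad I_i(\cdot,\theta),\Rad I_j(\cdot,\theta)\bigr)\,d\theta=SW^2(I_i,I_j).
\]
Invertibility of $\phi$ on its image follows by reading the construction backwards: recover $f_\theta$ from $\phi(I)(\cdot,\theta)/\sqrt{\Rad I_0(\cdot,\theta)}+id$, push $\Rad I_0(\cdot,\theta)$ forward by $f_\theta$ to obtain $\Rad I(\cdot,\theta)$, and apply $\Rad^{-1}$.

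The routine algebra is essentially a re-run of Theorem \ref{thm:wassersteinkernel} one direction at a time, so the only real work is the measure-theoretic bookkeeping in the constructive version: one must verify that $(t,\theta)\mapsto\phi(I)(t,\theta)$ is jointly measurable and lies in $L^2(\R\times\S^{d-1})$. Measurability reduces to the measurable dependence of the slice CDFs, hence of $f_\theta$, on $\theta$; square-integrability is free, since $\|\phi(I)\|_\V^2=SW^2(I,I_0)<\infty$ by definition of $SW$. It is also worth recording that positivity of $I_0$ (and therefore of each of its slices) is exactly what makes the slice-wise transport maps well defined through Theorem \ref{thm:unique}, and that $\V$ and $\phi$ depend on the choice of reference $I_0$ even though the isometry relation does not.
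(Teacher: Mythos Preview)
Your proposal is correct and mirrors the paper exactly: the corollary is obtained by combining Theorem~\ref{thm:main} with the ``only if'' direction of Theorem~\ref{thm:gausskernel}, and the explicit embedding you write down is precisely the paper's $\phi_\sigma(I_i):=(f_i(t,\theta)-t)\sqrt{\Rad I_0(t,\theta)}$ given immediately after the corollary. Your added remarks on joint measurability and the role of positivity of $I_0$ go slightly beyond what the paper spells out, but the approach is the same.
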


In fact, using a similar argument as the one we provided in the proof of Theorem \ref{thm:wassersteinkernel} it can be seen that for a fixed absolutely continuous measure, $\sigma$, with positive probability density function $I_0$, we can define,  
\begin{eqnarray}
\phi_\sigma(I_i):= (f_i(t,\theta)- t)\sqrt{\Rad I_0(t,\theta)}
\label{eq:phi}
\end{eqnarray}
where $f_i$ satisfies $\frac{\partial f_i(t,\theta)}{\partial t}\Rad I_i(f_i(t,\theta),\theta)=\Rad I_0(t,\theta)$. It is straightforward to show that such $\phi_\sigma$ also satisfies the following,
\begin{eqnarray}
SW(I_i,I_0)&=&\| \phi_\sigma(I_i)\|_2\\
SW(I_i,I_j)&=&\| \phi_\sigma(I_i)-\phi_\sigma(I_j)\|_2
\end{eqnarray}
for a detailed derivation and proof of the equations above please refer to \cite{kolouri2015radon}. More importantly, such nonlinear transformation $\phi_\sigma:M\rightarrow \V$ is invertible. 

%This implies that instead of using the kernel trick, one can truly map the probability distribution into the vector space $\V$ and perform any statistical analysis in $\V$ and finally invert the findings back to the original space $M$ via $\phi^{-1}_\sigma}.

\subsection{The Sliced Wasserstein polynomial kernel}
In this section, using Corollary \ref{coro:SW} we define a polynomial Kernel based on the Sliced Wasserstein distance and show that this kernel is positive definite.  
\begin{thm}
Let $M$ be the set of absolutely continuous positive probability density functions and let $\sigma$ be a template probability measure with corresponding probability density function $I_0\in M$. Let $\phi_\sigma: M\rightarrow \V$ be defined as in Equation \eqref{eq:phi}. Define a kernel function  $k: M\times M\rightarrow \R$  to be $k(I_i,I_j):=(\langle \phi_\sigma(I_i),\phi_\sigma(I_j)\rangle)^d$ for $d\in\{1,2,...\}$ then $k(.,.)$ is a positive  definite kernel. 
\end{thm}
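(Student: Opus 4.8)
The plan is to reduce the statement to two classical facts: that a Gram matrix is positive semidefinite, and that the entrywise (Hadamard/Schur) product of positive definite kernels is again positive definite. First I would handle the degree-one case, $k_1(I_i,I_j):=\langle \phi_\sigma(I_i),\phi_\sigma(I_j)\rangle$, which is positive definite by inspection: for any $I_1,\dots,I_n\in M$ and $c_1,\dots,c_n\in\R$,
\[
\sum_{i=1}^n\sum_{j=1}^n c_ic_j\langle\phi_\sigma(I_i),\phi_\sigma(I_j)\rangle=\Big\|\sum_{i=1}^n c_i\phi_\sigma(I_i)\Big\|_\V^2\ge 0,
\]
using only bilinearity of the inner product on $\V$; here $\V$ and $\phi_\sigma$ are precisely the objects furnished by Corollary \ref{coro:SW} and Equation \eqref{eq:phi}. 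Symmetry of $k_1$ is immediate from symmetry of $\langle\cdot,\cdot\rangle$, hence so is symmetry of $k=k_1^d$.

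Next I would invoke the Schur product theorem: if $k_a$ and $k_b$ are positive definite kernels on the same set, then their pointwise product $(I_i,I_j)\mapsto k_a(I_i,I_j)\,k_b(I_i,I_j)$ is positive definite. Applying this inductively with $k_a=k_1^{m}$ and $k_b=k_1$ shows that $k_1^{d}$ is positive definite for every $d\in\{1,2,\dots\}$, which is the claim. For completeness I would either cite the Schur product theorem (e.g. as stated in \cite{berg1984harmonic}) or reproduce its short proof via the observation that the Hadamard product of two positive semidefinite Gram matrices appears as a principal submatrix of their Kronecker product, hence is itself positive semidefinite.

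An alternative, more explicit route — in the spirit of the rest of the paper, which emphasizes writing down feature maps — is to expand $\langle x,y\rangle^d$ by the multinomial theorem and recognize it as an inner product on the $d$-th symmetric tensor power of $\V$: setting $\Phi_d(I):=\phi_\sigma(I)^{\otimes d}$ (with the multinomial coefficients absorbed when a coordinate basis is used), one gets $k(I_i,I_j)=\langle\Phi_d(I_i),\Phi_d(I_j)\rangle$, so $k$ is once again a Gram kernel and the first displayed computation applies verbatim with $\phi_\sigma$ replaced by $\Phi_d$.

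I do not anticipate a genuine obstacle here; the only point that needs care is ensuring the ambient object really is an inner product space so that "a Gram matrix is positive semidefinite" is legitimately available — and that is exactly what Corollary \ref{coro:SW} provides. If one wanted the explicit-feature-map version to be fully rigorous in the possibly infinite-dimensional setting, the mild technical step is constructing the symmetric tensor power of $\V$ (or its Hilbert-space completion); the Schur-product argument avoids this entirely, so I would present that as the main proof and relegate the tensor-power construction to a remark.
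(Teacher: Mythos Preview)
Your proposal is correct and matches the paper's approach: the paper also first verifies the $d=1$ case via the Gram-matrix computation $\sum_{i,j} c_ic_j\langle\phi_\sigma(I_i),\phi_\sigma(I_j)\rangle=\|\sum_i c_i\phi_\sigma(I_i)\|^2\ge 0$, and then appeals to the standard closure of positive definite kernels under products (which the paper phrases as ``Mercer's kernel properties'' and you phrase as the Schur product theorem) to handle $d>1$. Your presentation is in fact a bit more explicit than the paper's, but the underlying argument is the same.
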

\begin{proof}
Given $I_1,...,I_N\in M$ and for $d=1$ we have,

\begin{eqnarray}
&& \sum_{i=1}^N\sum_{j=1}^N c_ic_j \langle \phi_\sigma(I_i),\phi_\sigma(I_j)\rangle=\nonumber\\&& \langle \sum_{i=1}^Nc_i\phi_\sigma(I_i),\sum_{j=1}^Nc_j\phi_\sigma(I_j)\rangle
= \|\sum_{i=1}^Nc_i \phi_\sigma(I_i)\|^2_2\geq 0\nonumber\\
\end{eqnarray}
and since $k(I_i,I_j)=\langle \phi_\sigma(I_i),\phi_\sigma(I_j)\rangle$ is a positive definite kernel and from Mercer's kernel properties it follows that $k(I_i,I_j)=(\langle \phi_\sigma(I_i),\phi_\sigma(I_j)\rangle)^d$ and $k(I_i,I_j)=(\langle \phi_\sigma(I_i),\phi_\sigma(I_j)\rangle+1)^d$ are also positive definite kernels. 
\end{proof}

\section{The Sliced Wasserstein Kernel-based  algorithms}
\label{sec:methods}

\subsection{Sliced Wasserstein Kernel $k$-means }

Considering clustering problems for data with the form of probability distributions, we propose the Sliced Wasserstein $k$-means. For a set of input data $I_1,...,I_N\in M$, the Sliced Wasserstein $k$-means with kernel $k(I_i,I_j)=\langle \phi_\sigma(I_i),\phi_\sigma(I_j) \rangle$ transforms the input data to the kernel space via $\phi_\sigma:  M\rightarrow \V$ and perform $k$-means in this space. Note that since $\|\phi_\sigma(I_i)-\phi_\sigma(I_j)\|_2=SW(I_i,I_j)$, performing $k$-means in $\V$ is equivalent to performing $k$-means with Sliced Wasserstein distance in $M$. The kernel $k$-means with the Sliced Wasserstein distance, essentially provides $k$ barycenters for the input distributions. In addition,  the Gaussian and polynomial Sliced Wasserstein kernel $k$-means are obtained by performing Gaussian and polynomial kernel $k$-means in $\V$.

\subsection{Sliced Wasserstein Kernel PCA }

Now we consider the key concepts of the kernel PCA. The Kernel-PCA \cite{scholkopf1997kernel} is a non-linear dimensionality reduction method that can be interpreted as applying the PCA in the kernel-space (or feature-space), $\V$.  Performing standard PCA on $\phi_\sigma(I_1),...,\phi_\sigma(I_N)\in\V$ provides the Sliced Wasserstein kernel PCA.  In addition, applying Gaussian and polynomial Sliced Wasserstein kernel PCA to $I_1,...,I_N\in M$ is also equivalent to applying  Gaussian and polynomial kernel PCA on $\phi_\sigma(I_1),...,\phi_\sigma(I_N)\in\V$.  We utilize the cumulative percent variance (CPV) as a quality measure for how well the principal components are capturing the variation of the dataset in $M$ and similarly in $\V$. 

\subsection{Sliced Wasserstein Kernel SVM}

For a binary classifier, given a set of training examples $\{I_i,y_i\}_{i=1}^N$ where $I_i\in M$ and $y_i\in\{-1,1\}$, support vector machine (SVM) searches for a hyperplane in $M$ that separates training classes while maximizing the separation margin where separation is measured with the Euclidean distance. A kernel-SVM , on the other hand, searches for a hyperplane in $\V$ which provides maximum margin separation between $\phi_\sigma(I_i)$s which is equivalent to finding a nonlinear classifier in $M$ that maximizes the separation margin according to the Sliced Wasserstein distance. Note that the kernel SVM with the Sliced Wasserstein Gaussian and polynomial kernels are essentially equivalent to applying kernel SVM, with the same kernels in the transformed Sliced Wasserstein space $\V$. It is worthwhile to mention that, since $\phi_\sigma$ is invertible, the Sliced Wasserstein kernel SVM learned from kernel $k(I_i,I_j)=\langle \phi_\sigma(I_i),\phi_\sigma(I_j)\rangle$ can be sampled along side the orthogonal direction to the discriminating hyperplane in $\V$ and then inverted through $\phi^{-1}_\sigma$ to directly get the discriminating features in the space of the probability densities, $M$. Finally, for multiclass classification problems, the problem can be turned into several binary classification tasks using pairwise coupling as suggested by Wu et al. \cite{wu2004probability}  or a one versus all approach. 

\begin{figure*}[t]
\centering
\includegraphics[width=\linewidth]{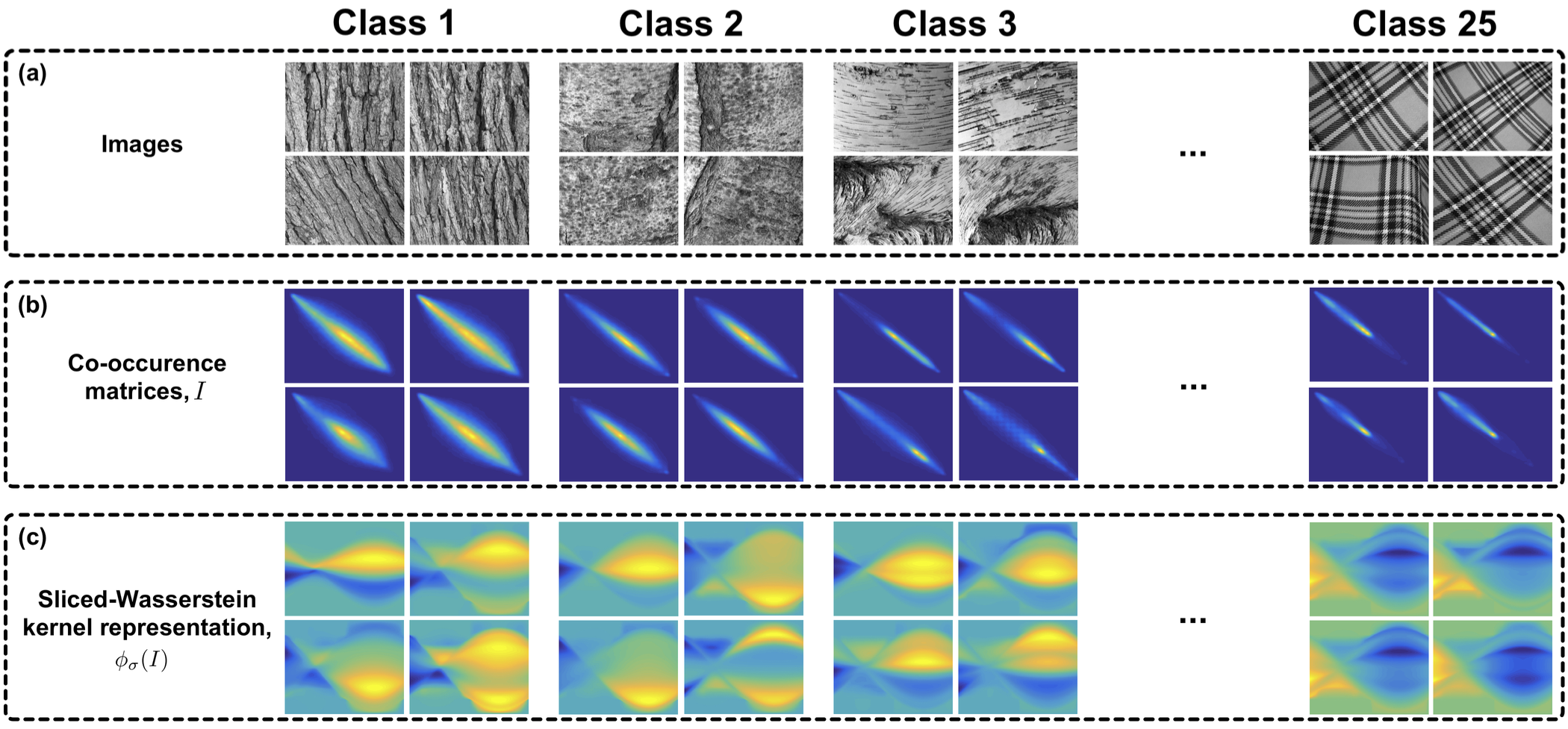}
\caption{The UIUC texture dataset with 25 classes (a), the corresponding calculated co-occurence matrices (b), and the kernel representation (i.e. $\phi_\sigma$) of the co-occurence matrices (c). }
\label{fig:uiuc}
\end{figure*}  

\begin{figure*}[t]
\centering
\includegraphics[width=\linewidth]{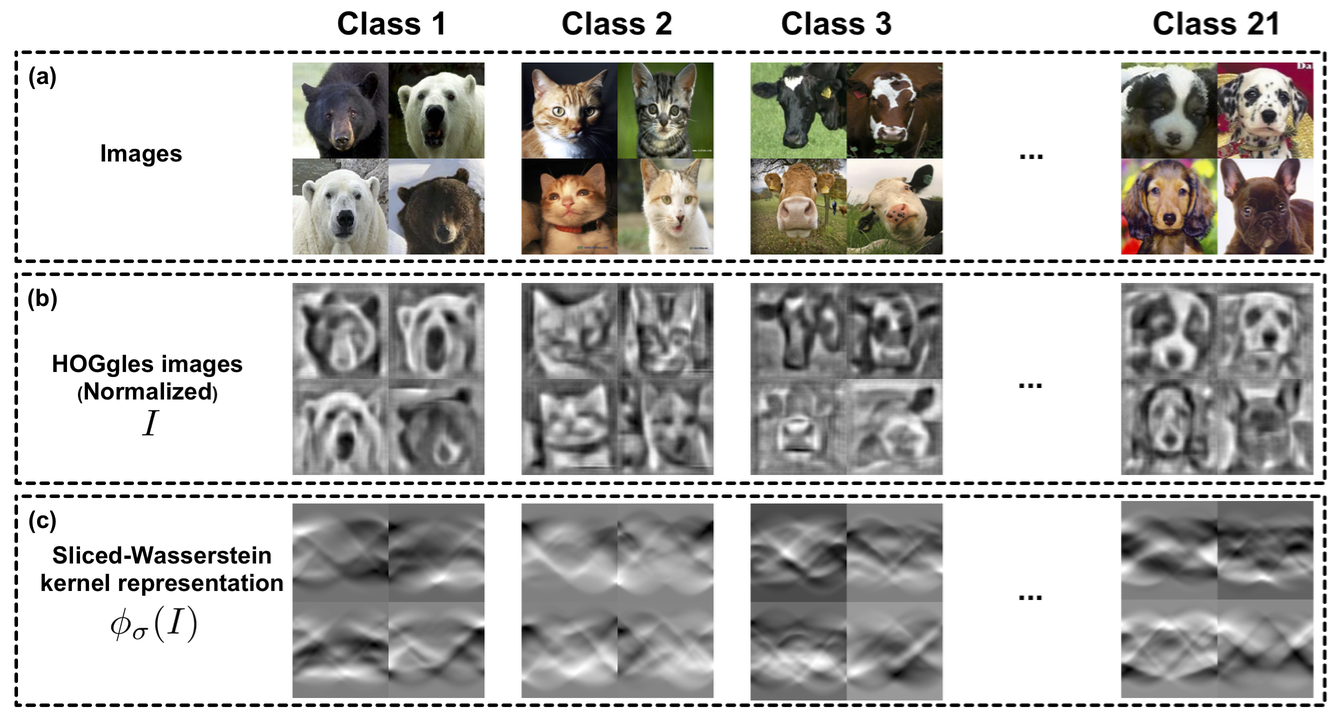}
\caption{The LHI animal face dataset with 21 classes (a), the corresponding calculated HOGgles representation (b), and the kernel representation (i.e. $\phi_\sigma$) of the HOGgles images (c). }
\label{fig:LHI}
\end{figure*}

%\begin{figure*}[t]
%\centering
%\includegraphics[width=\linewidth]{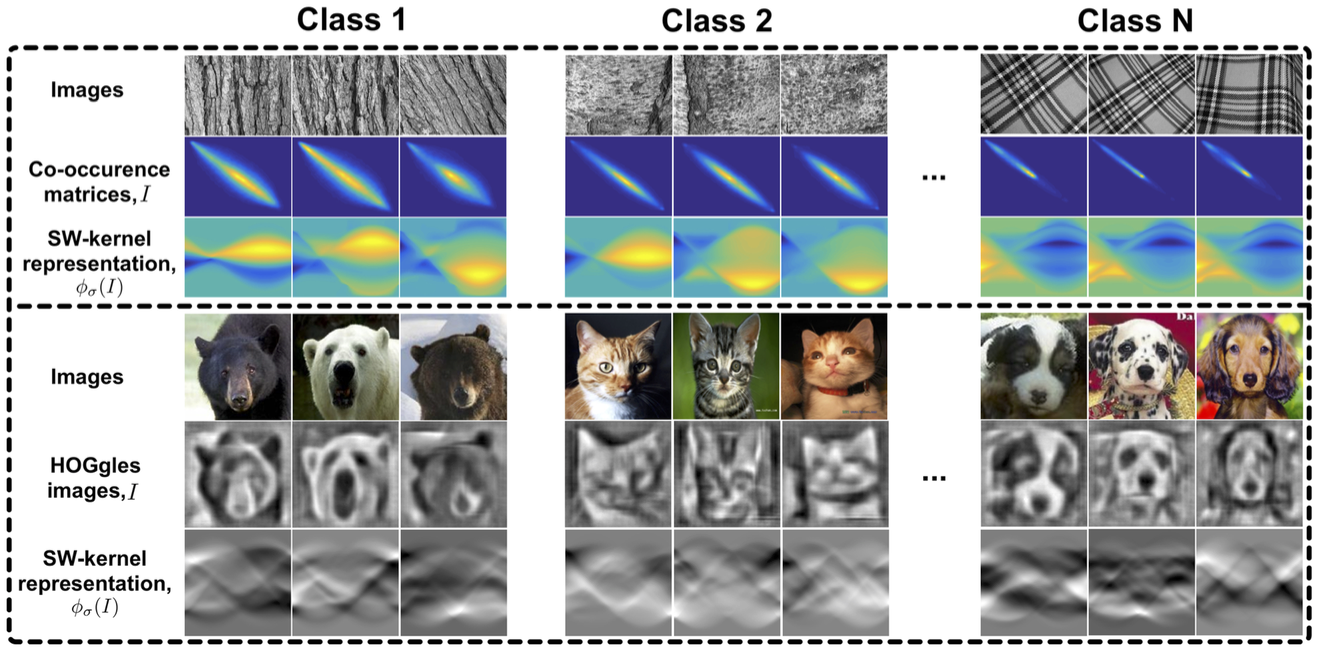}
%\caption{The texture and LHI animal face datasets, the corresponding calculated co-occurence and HOGgles representations, and the corresponding kernel representation (i.e. $\phi_\sigma$). The number of classes for the texture dataset is $N=25$ and for the LHI dataset is $N=21$.}
%\label{fig:LHI}
%\end{figure*}  

\section{Experimental Results}
\label{sec:results}

For our experiments we utilized  two image datasets, namely the University of Illinois Urbana Champaign (UIUC) texture dataset \cite{lazebnik2005sparse}  and the LHI animal face dataset \cite{si2012learning}. The texture dataset contains $25$ classes of texture images with $40$ images per class, which include a wide range of variations. The animal face dataset contains $21$ classes of animal faces with the average number of images per class being $114$. Figures \ref{fig:uiuc} (a) and \ref{fig:LHI} (a) show sample images from image classes for the texture and the LHI dataset, respectively. For the texture dataset we extract the gray-level co-occurence matrices for texture images and normalized them to obtain empirical joint probability density functions of co-occuring gray levels (See Figure \ref{fig:uiuc} (b)). On the other hand, for the animal face dataset, we used the normalized HOGgles images  \cite{vondrick2013hoggles} as a probability distribution representation of RGB animal face images (See Figure \ref{fig:LHI} (b)). The kernel representation $\phi_\sigma(I)$ for the extracted probability distributions is then calculated as shown in Figures \ref{fig:uiuc} (c) and \ref{fig:LHI} (c). We note that the fixed density $I_0$ for both datasets is chosen to be the average distribution over the entire dataset.  We also acknowledge that the HOGgles is not designed for feature extraction but rather for visualization of the extracted HoG features \cite{vondrick2013hoggles}, but our goal here is to show that for any extracted probability density features from images the Sliced Wasserstein kernels often outperform commonly used kernels.

First, the PCA of the input data, $I_1,...,I_N\in M$ (i.e. the co-occurence matrices for the texture images and the HOGgles images for the LHI dataset) as well as the kernel-PCA of the data with kernel $k(I_i,I_j)=\langle \phi_\sigma(I_i),\phi_\sigma(I_j)\rangle$ are calculated for both datasets. The reason behind choosing this kernel over the polynomial of degree $d>1$ or Gaussian Sliced Wasserstein kernel is simply that it is parameter free and the eigenvalue spectrum of the kernel matrix does not depend on hyperparameters. Figure \ref{fig:percentageEnergy} shows the cumulative percent variance (CPV) of the dataset captured by the principal components of $I_1,...,I_N \in M$ and $\phi_\sigma(I_1),...,\phi_\sigma(I_N)\in\V$  for both datasets. It can be seen that the variations in the datasets are captured more efficiently in the Sliced Wasserstein kernel space.

\begin{figure}
\centering
%\includegraphics[width=\columnwidth]{Figures/PercentageEnergy.png}\\
%(a)\\
%\includegraphics[width=\columnwidth]{Figures/PercentageEnergyLHI.png}\\
%(b)
\includegraphics[width=\columnwidth]{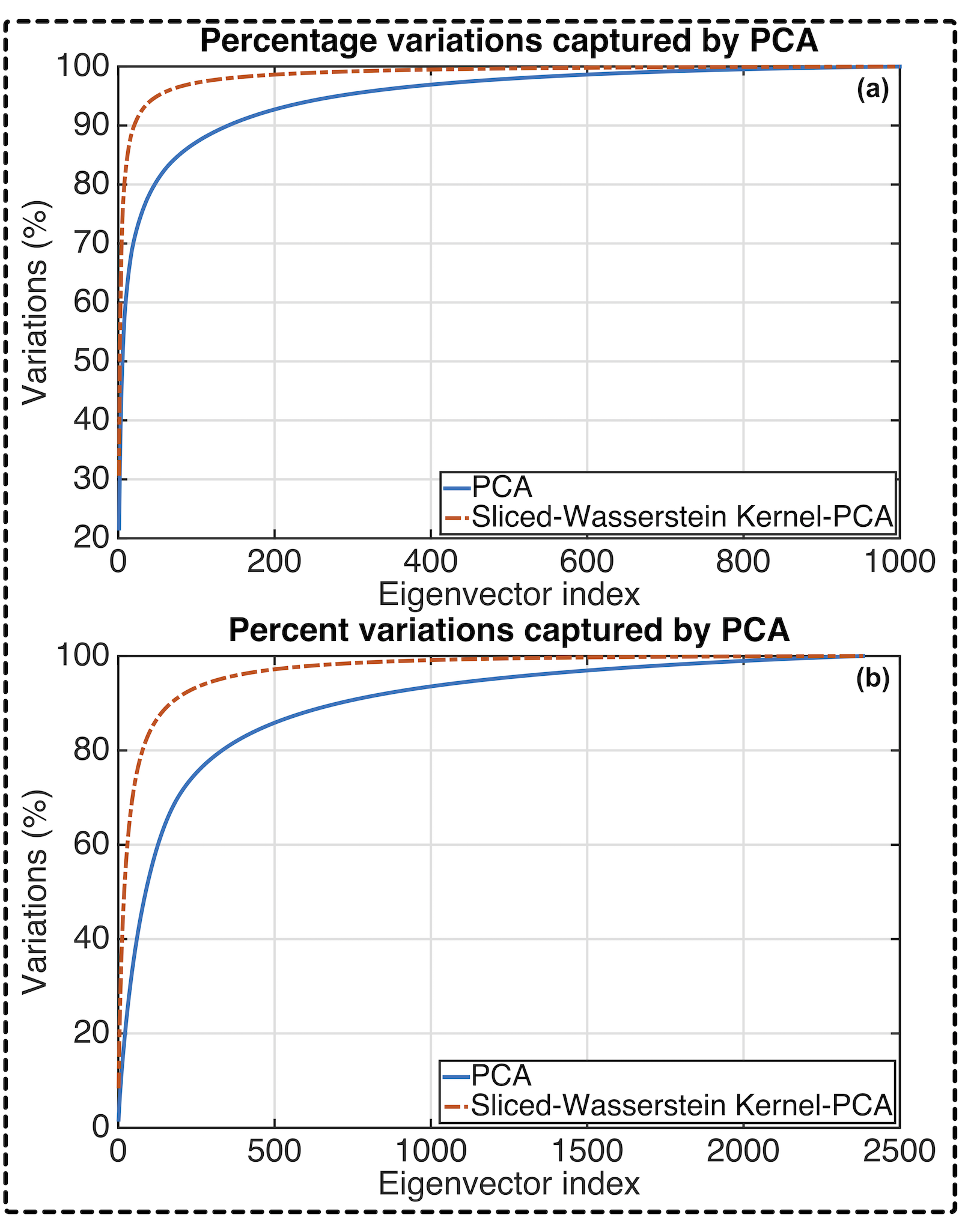}\\
\caption{Percentage variations captured by eigenvectors calculated from PCA and calculated from Sliced Wasserstein kernel PCA for the texture dataset (a) and for the animal face dataset (b).}
\label{fig:percentageEnergy}
\end{figure}

Next, we performed classification tasks on the texture and animal face datasets. Linear SVM, RBF kernel SVM, Sliced Wasserstein Gaussian kernel SVM, and the Sliced Wasserstein polynomial kernel were utilized for classification accuracy comparison.  A five fold cross validation scheme was used, in which $20\%$ of each class is held out for testing  and the rest is used for training and parameter estimation. The hyper parameters of the kernels are calculated through grid search. The classification experiments were repeated $100$ times and the means and standard deviations of the classification accuracies for both datasets are reported in Figure \ref{fig:accuraySVM}.

\begin{figure}
\centering
\includegraphics[width=\columnwidth]{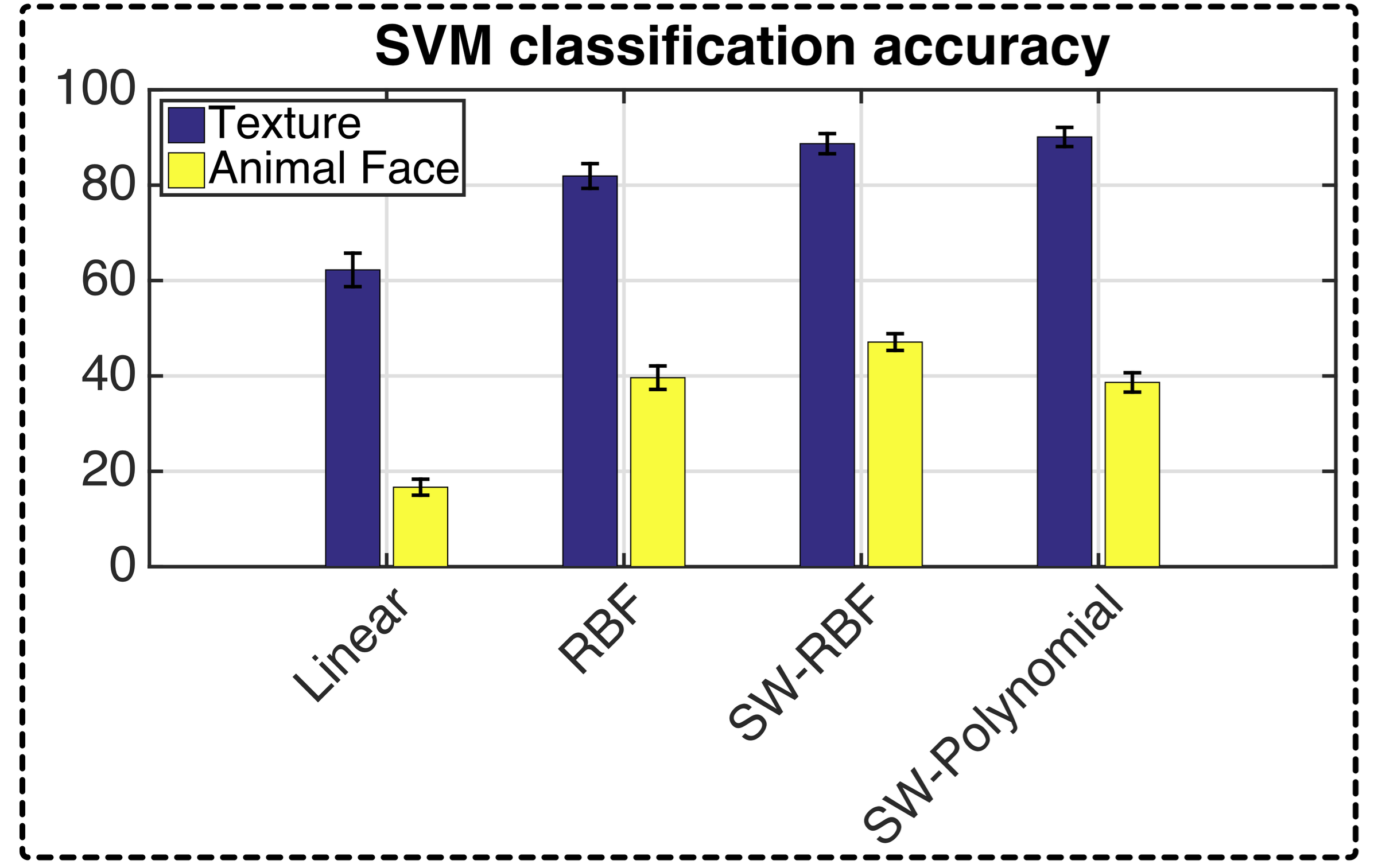}
\caption{Kernel SVM classification accuracy with linear kernel, radial basis function kernel (RBF), Sliced Wasserstein Gaussian Kernel (SW-RBF), and Sliced Wasserstein Polynomial Kernel (SW-Polynomial).}
\label{fig:accuraySVM}
\end{figure}

Finally, we perform clustering on the UIUC texture and the LHI animal face dataset. We utilized the k-means algorithm on the normalized co-occurence matrices and the HOGgles images, $I\in M$, and their corresponding representations in the kernel space, $\phi_\sigma(I)\in\V$ (i.e. kernel $k$-means). In order to be able to compare the within-cluster sum-of-squares (aka the inertia) we normalized the data by the average norm of $I_i$'s and $\phi_\sigma(I_i)$'s. We repeated the $k$-means experiment $100$ times and measured the within-cluster sum-of-squares and the V-measure \cite{rosenberg2007v}, which is a conditional entropy-based external cluster evaluation measure, at each iteration for both datasets.  Figure \ref{fig:clustereval}  shows the mean and standard deviation of the within-cluster sum-of-squares and the V-measure for $k$-means and Sliced Wasserstein kernel $k$-means. It can be seen that the Sliced Wasserstein Gaussian Kernel $k$-means provides better clusters which match the texture and animal face classes better, as it leads to higher V-measure values and lower inertia.  

\begin{figure}
\centering
\includegraphics[width=\columnwidth]{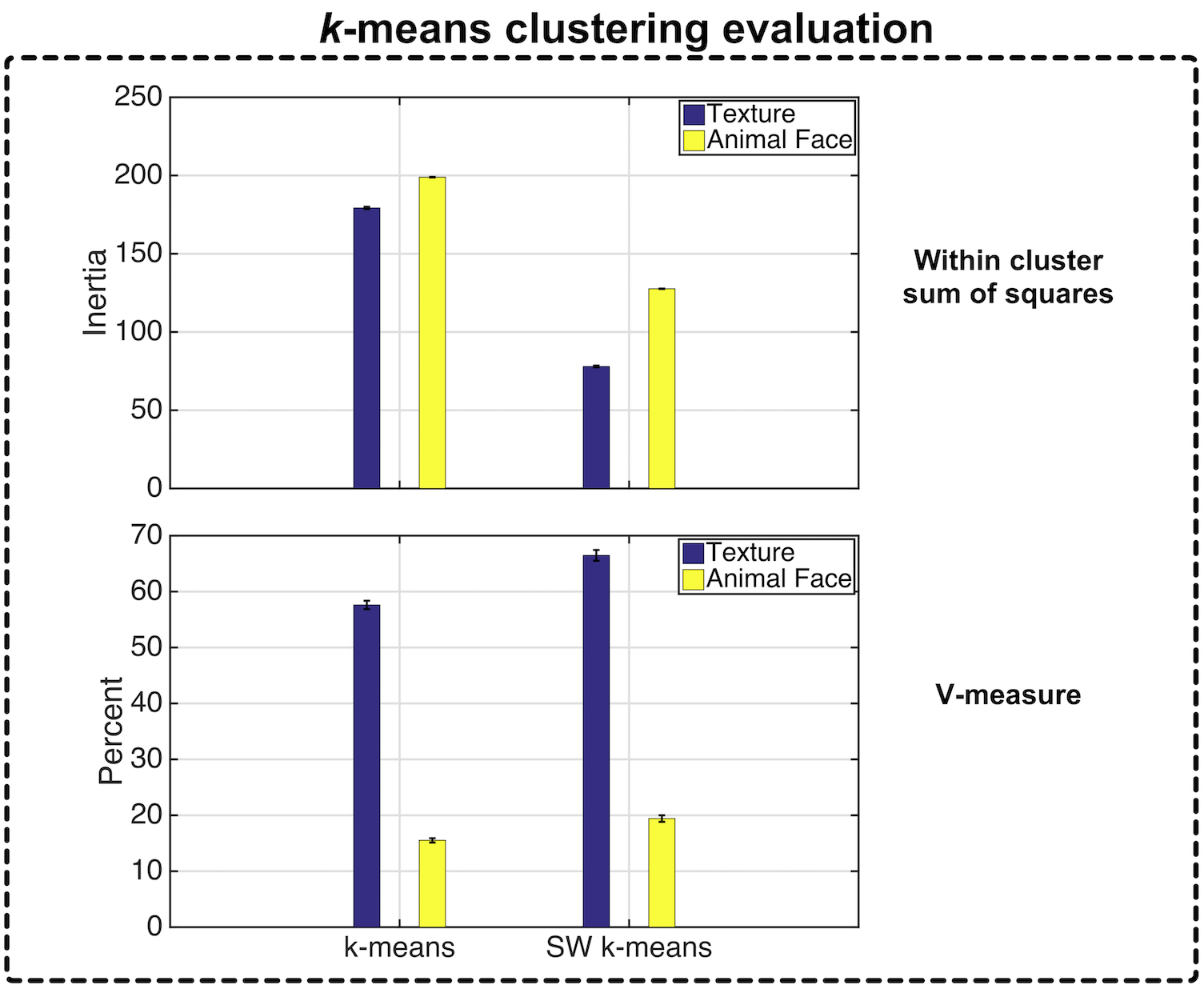}
\caption{Cluster evaluation for $k$-means and Sliced Wasserstein kernel $k$-means using the  within-cluster sum-of-squares measure (a), and the V-measure (b).}
\label{fig:clustereval}
\end{figure}

\section{Discussion}
\label{sec:conclusion}

In this paper, we have introduced a family of provably positive definite kernels for probability distributions based on the mathematics of the optimal transport and more precisely the Sliced Wasserstein distance. We denote our proposed family of kernels as the Sliced Wasserstein kernels. Following  the work of \cite{park2015cumulative,kolouri2015radon}, we provided an explicit nonlinear and invertible formulation for mapping probability distributions to the kernel space (aka feature space). Our experiments demonstrated the benefits of the Sliced Wasserstein kernels over the commonly used RBF and Polynomial kernels in a variety of pattern recognition tasks on probability densities. 

More specifically, we showed that utilizing a dimensionality reduction scheme like PCA with the Sliced Wasserstein kernel leads to capturing more of the variations of the datasets with fewer parameters. Similarly, clustering methods can benefit from the Sliced Wasserstein kernel as the clusters have higher values of V-measure and lower value of inertia. Finally, we demonstrated that the classification accuracy for a kernel classifier like the kernel SVM  can also benefit from the Sliced Wasserstein kernels.  

Finally, the experiments in this paper were focused on two-dimensional distributions. However, the proposed framework can be extended to higher-dimensional probability densities. We therefore intend to investigate the application of the Sliced Wasserstein kernel to higher-dimensional probability densities such as volumetric MRI/CT brain data. 

%{\small
\bibliographystyle{ieee}
\bibliography{Wasserstein_Kernel_arXiv}
%}

\end{document}